\documentclass[pdflatex,iicol,sn-basic,Numbered]{sn-jnl}

\usepackage{graphicx}
\usepackage{multirow}
\usepackage{array}
\usepackage{amsmath,amssymb,bm,amsthm}
\usepackage{booktabs}
\usepackage{algorithm}
\usepackage{algorithmic}
\usepackage{placeins}
\usepackage{xcolor}
\usepackage{siunitx}
\usepackage{float}
\usepackage{tabularx}
\usepackage{placeins}

\graphicspath{{figures/}}
\renewcommand{\arraystretch}{1.10}

\newtheorem{proposition}{Proposition}
\renewcommand{\proofname}{Proof}

\makeatletter
\let\sn@original@maketitle\@maketitle
\renewcommand{\@maketitle}{\let\newpage\relax\sn@original@maketitle}
\makeatother

\begin{document}

\title[Vehicle Drift Emergence]{Vehicle Drift Emergence: Continuous Evolution from Grip Driving to the Handling Limit via Boundary Exploration Learning Model Predictive Control}

\author[1]{\fnm{Sheng} \sur{Zhao}}
\author[2]{\fnm{Binh-Minh} \sur{Nguyen}}
\author[1,3]{\fnm{Hangyu} \sur{Lu}}
\author*[1]{\fnm{Xiaodong} \sur{Wu}}\email{xiaodongwu@sjtu.edu.cn}

\affil[1]{\orgdiv{School of Mechanical Engineering}, \orgname{Shanghai Jiao Tong University}, \orgaddress{\city{Shanghai}, \postcode{200240}, \country{China}}}
\affil[2]{\orgdiv{Department of Advanced Energy, Graduate School of Frontier Sciences}, \orgname{The University of Tokyo}, \orgaddress{\city{Kashiwa}, \postcode{277-8561}, \country{Japan}}}
\affil[3]{\orgdiv{Faculty of Engineering}, \orgname{The University of Hong Kong}, \orgaddress{\street{Pokfulam Road}, \city{Hong Kong}, \postcode{999077}, \country{China}}}

\abstract{
Automated drift controllers commonly track a prescribed drift equilibrium, sideslip reference, or trajectory. These formulations establish how to execute drift, whereas the continuous transition from grip driving to drift near the handling limit remains unresolved.
This paper defines \textbf{\emph{drift emergence}} in a repetitive lap time minimization task, where neither the controller objective nor the reward contains an explicit drift reference. A boundary exploration learning model predictive controller (BE-LMPC) constructs an empirical safe set and a locally shifted terminal cost from completed laps. By iteratively improving spatial speed allocation under a fixed global speed bound, the controller progressively explores larger sideslip and yaw rate envelopes while preserving recoverability.
As lap performance improves, sustained sideslip and pronounced yaw motion emerge while the rear axle approaches saturation. Analysis shows that, when external conditions vary smoothly, the transition from tire adhesion to sliding does not itself cause abrupt changes in tire force or vehicle state. The combined-slip Fiala model satisfies this continuity condition at the transition.
At a tire road friction coefficient of 0.6, lap time decreases from 49.95 s on Lap~3 to 25.50 s on Lap~12, with drift first emerging on Lap~11. Lap~12 reaches 16.5$^\circ$ sideslip and 0.894 rear axle utilization. In contrast, no drift is detected for friction coefficients from 0.8 to 1.2; at 1.2, a similar peak speed is achieved with only 0.483 rear axle utilization. These results characterize drift as a conditional continuation of limit handling that emerges when increasing performance demand approaches the available tire capacity, rather than as a separately prescribed motion mode.
}

\keywords{Vehicle dynamics, drift emergence, model predictive control, handling envelope, limit handling}

\onecolumn
\maketitle
\clearpage
\twocolumn

\section{Introduction}\label{sec:introduction}

Drifting is a limit-handling maneuver in which the driver coordinates steering and propulsion as the rear tires approach saturation and the vehicle develops a large sideslip angle. Early autonomous-driving studies examined vehicle control at the handling limit\cite{kritayakirana2012}. Subsequent work used drift equilibria and nonlinear model predictive control to achieve steady-state or transient drifting, and dynamic trajectory planning extended these capabilities to obstacle avoidance\cite{shi2023,goh2024,weber2024,stano2024}. These results established that a prescribed drift motion can be modeled and tracked accurately beyond conventional stability boundaries.

Most autonomous drift controllers therefore follow a \emph{prescribe-and-track} workflow\cite{shi2023,goh2024,weber2024,stano2024}. The designer specifies a sideslip or yaw-rate reference, a drift equilibrium, or a complete trajectory before online control. The controller then reaches and maintains that target. This formulation supports repeatable maneuver execution, yet it leaves the origin of high-sideslip motion from ordinary driving unexplained when the control objective is independent of a drift target. Stability-envelope studies characterize the admissible region or prevent loss of stability, so their objective also differs from observing the motion selected by a performance-seeking closed loop near the boundary. Existing results leave cross-lap evolution from a conservative grip trajectory uncharacterized under one fixed plant and controller structure. The unresolved questions are whether lap-time-oriented learning can generate drift autonomously and which demand-to-capacity conditions make that response reachable.

A continuous dynamic description is required to answer these questions. Nonlinear tire models represent lateral force as a continuous approach to saturation while the contact patch progressively slides\cite{pacejka2012,rajamani2012}. Analyses of stability regions and dynamic margins further show that vehicle speed and yaw states interact with incremental tire characteristics to determine proximity to the handling boundary\cite{huangregion2020,huangshift2020,alves2022,ding2022}. Recent work combined nonlinear vehicle models with bifurcation and phase-plane analyses to identify lateral-instability mechanisms, then coordinated active front steering with direct yaw-moment control to retain stability\cite{sun2026lateral}. This literature supports a continuous account of limit handling. Its prevailing objective is boundary avoidance, whereas the high-dynamic response selected naturally near that boundary remains less well characterized.

Recent studies have extended the usable dynamic range through limit-path tracking and handling-margin monitoring. Related controllers combine torque-vectoring allocation with stability control for distributed-drive vehicles\cite{castellanos2024,lin2025,bertipaglia2025,zhangai2025,shi2025,Li2026YawRollRL}. Hierarchical path-tracking control has treated uncertain cornering stiffness and tire slip through adaptive neural approximation coupled with torque-vector allocation\cite{li2025adaptive}. Nonlinear model predictive controllers have also used event-triggered updates or adaptive horizons to reduce computation while retaining tracking accuracy\cite{zhang2025realtime}. Front-rear longitudinal force allocation has also been optimized
jointly with motor current to improve the energy efficiency of
dual-motor electric vehicles~\cite{Nguyen2026EnergyAllocation}.
A complementary analytical formulation determines the driving and
braking force distribution required for neutral-steer characteristics
and maximum lateral acceleration~\cite{Toyota2026ForceDistribution}. Collectively, these methods coordinate trajectory-level objectives with vehicle attitude, propulsion, and safety constraints. Their principal objectives remain prescribed-trajectory tracking and instability suppression. Performance-driven generation of high-sideslip motion in the absence of a drift target has received less attention.

Autonomous racing provides a controlled setting for observing this response over repeated laps. Learning model predictive control (LMPC) constructs an empirical safe set and terminal cost from completed trajectories, thereby improving performance across repetitions while leaving the full optimal trajectory unspecified\cite{rosolia2018,rosolia2020}. Feasibility analysis and predictive safety filters provide mechanisms for recoverable operation near dynamic limits; cautious learning control addresses the same concern from a data-driven perspective\cite{liniger2019,tearle2021,hewing2018}. Constrained reinforcement learning has likewise pursued monotonic policy improvement with explicit feasibility recovery\cite{wang2026fomi}. Interpreting an emergent high-sideslip response also requires a consistent dynamic representation across laps because changes in the vehicle model or controller could create an artificial transition. Studies of limit-trajectory optimization have consequently emphasized model fidelity\cite{subosits2021,zhang2024,zhao2024}. Integrated physics-data models updated from moving-window samples offer another route to maintaining physical interpretability under changing driving conditions\cite{wei2025ipdb}.

This article studies a repetitive racing task in which the closed-loop trajectory evolves from conventional grip cornering to sustained high-sideslip motion with high rear-axle utilization as performance improves on a low-friction surface. The maneuver emerges from the interaction between empirical progress optimization and nonlinear vehicle dynamics under a controller that remains independent of a drift target. The analysis asks whether drift can originate from conventional driving under an unchanged closed-loop formulation and why the response appears only under specific operating conditions.

The principal contributions are as follows.
\begin{enumerate}
  \item  Vehicle drift emergence is formulated on a continuous-dynamics basis. General tire-map regularity conditions distinguish physical continuity from a post hoc grip or drift label. The Fiala model provides an analytical example, and a demand-to-capacity ratio explains why continuity permits drift only under reachable boundary conditions.
  \item A boundary-exploration LMPC, termed BE-LMPC, is developed to expose this conditional response independently of a prescribed drift target. The controller combines a local empirical safe set and progress cost with recoverability-gated expansion of the sideslip and yaw-rate envelope. A deterministic numerical study evaluates the interpretation using a friction sweep and detector-sensitivity analysis, followed by comparisons with grip-constrained and fixed-envelope methods.
\end{enumerate}

The remainder of this article is organized as follows. Section~\ref{sec:vehicle-model} defines the vehicle model and its assumptions. Section~\ref{sec:formulation} formulates drift emergence and analyzes continuity and boundary reachability. Section~\ref{sec:controller} presents BE-LMPC. Sections~\ref{sec:simulation} and \ref{sec:results} describe the simulation setup and discuss the results, respectively. Section~\ref{sec:conclusion} concludes the article.

\section{Vehicle Dynamics Model and Assumptions}\label{sec:vehicle-model}

\subsection{Vehicle Configuration and State Definition}

The vehicle dynamics are represented by a planar three-degree-of-freedom single-track model, as illustrated in Fig. \ref{fig:vehicle_model}. The vehicle employs front-axle steering, with the steering input $\delta$ applied at the front axle. Rear-wheel steering and direct yaw-moment actuation are not considered. Both axles provide longitudinal actuation. A prescribed front-axle fraction $\lambda_x$ remains fixed throughout each simulation, and all reported simulations use $\lambda_x=0.5$. This prescribed split is an axle-level abstraction of the powertrain and braking system.

\begin{figure}[ht]
  \centering
  \includegraphics[width=0.97\columnwidth]{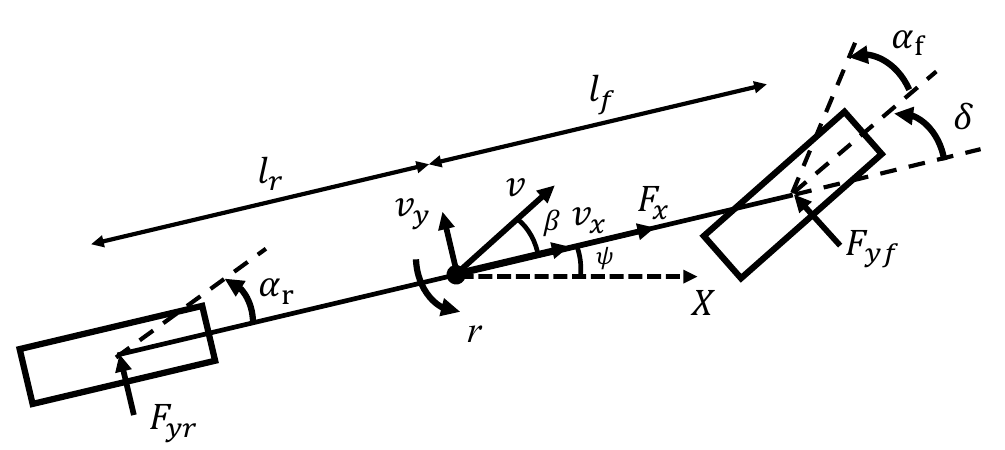}
  \caption{Planar three-degree-of-freedom single-track model with vehicle sideslip angle $\beta$.}
  \label{fig:vehicle_model}
\end{figure}
The vehicle state and control input are
\begin{equation}
\bm{x}=[X,Y,\psi,v,r,\beta]^\mathsf T,\qquad
\bm{u}=[a_{\rm cmd},\delta]^\mathsf T .
\end{equation}
The state evolves according to the planar single-track dynamics
\begin{equation}
\begin{split}
\dot X &=v\cos(\psi+\beta),\quad
\dot Y=v\sin(\psi+\beta),\\
\dot\psi&=r,\qquad
\dot v\approx a_x,\\
\dot r&=
\frac{l_fF_{yf}\cos\delta-l_rF_{yr}}{I_z},\\
\dot\beta&=
\frac{F_{yf}\cos\delta+F_{yr}}{mv}-r .
\end{split}
\label{eq:vehicle}
\end{equation}
Here, $v=\sqrt{v_x^2+v_y^2}$ denotes the magnitude of the CG velocity. If $a_x$ and $a_y$ denote the longitudinal and lateral CG acceleration components, respectively, its exact time derivative satisfies
\[
\dot v=a_x\cos\beta+a_y\sin\beta,
\]
which reduces to $\dot v\approx a_x+\beta a_y$ to first order in $\beta$. The reduced-order model neglects the lateral contribution to the scalar speed update and uses $\dot v\approx a_x$. This approximation is applied unchanged to all laps and friction cases and is independent of the grip-to-drift classification.
The front and rear tire slip angles are
\begin{equation}
\begin{split}
\alpha_f&=\arctan\frac{v\sin\beta+l_fr}{v\cos\beta}-\delta,\\
\alpha_r&=\arctan\frac{v\sin\beta-l_rr}{v\cos\beta}.
\end{split}
\end{equation}

Below \SI{0.9}{m/s}, the dynamic equations are blended smoothly with a regularized kinematic model to remove the singularity in the $\beta$ dynamics as $v\rightarrow0$. Every reported lap starts at or above \SI{1.2}{m/s}, where the blending weight equals one. Thus, the nonlinear model in \eqref{eq:vehicle} governs all reported tests. The low-speed regularization remains fixed across laps and is independent of the grip or drift classification.

\subsection{Longitudinal Actuation and Prescribed Axle-Force Split}

The first control input specifies an acceleration request. With $F_x^{\rm req}=ma_{\rm cmd}$, the abstract wheel-force actuator applies
\begin{equation}
F_x=
\begin{cases}
\min\!\left(
\begin{aligned}
&F_x^{\rm req},F_{d,\max},\\[-2pt]
&F_{p,\max}(v),\mu mg
\end{aligned}
\right),
&a_{\rm cmd}\geq0,\\
\max\!\left(F_x^{\rm req},-F_{b,\max},-\mu mg\right),
&a_{\rm cmd}<0,
\end{cases}
\label{eq:longitudinal_force}
\end{equation}
where $F_{p,\max}(v)=P_{\max}/\max(v,v_\epsilon)$ denotes the power-limited wheel force. The parameters $F_{d,\max}$ and $F_{b,\max}$ bound the drive and brake forces, respectively. The quantities $P_{\max}$ and $v_\epsilon$ define the wheel-power limit and its low-speed regularization. The resulting longitudinal acceleration is
\begin{equation}
a_x=\frac{F_x-c_{\rm rr}mg\,\sigma(v)-c_dv^2}{m}.
\label{eq:longitudinal}
\end{equation}
The function $\sigma(v)$ activates rolling resistance smoothly near zero speed. Quasi-static longitudinal load transfer yields
\begin{equation}
\begin{aligned}
F_{zf}&=\frac{m(gl_r-a_xh)}{L},&
F_{zr}&=\frac{m(gl_f+a_xh)}{L},\\
L&=l_f+l_r.&&
\end{aligned}
\label{eq:normal_loads}
\end{equation}
Each computed axle load has a lower bound of $0.05mg$ for numerical protection. This bound remains inactive over the reported operating range. The total longitudinal force is distributed between the front and rear axles according to
\begin{equation}
F_{xf}=\lambda_xF_x,\qquad
F_{xr}=(1-\lambda_x)F_x.
\label{eq:axle_force_split}
\end{equation}
Here, $\lambda_x\in[0,1]$ denotes the prescribed front-axle force fraction. All reported simulations use $\lambda_x=0.5$, corresponding to a fixed $50{:}50$ front-rear force split. This choice provides an axle-level abstraction without introducing drivetrain-specific torque-distribution logic into the drift-emergence analysis. The split remains fixed despite the time-varying normal loads in \eqref{eq:normal_loads}. Load transfer therefore changes each axle's normalized combined-force utilization through $F_{zi}$ rather than through a varying longitudinal-force allocation. The physical motor and transmission topology remains unspecified. Differential and brake-balance dynamics are omitted, together with wheel-speed and longitudinal-slip states. The combined utilization used for event detection is
\begin{equation}
\rho_i=\frac{\sqrt{F_{xi}^2+F_{yi}^2}}{\mu F_{zi}}.
\label{eq:utilization}
\end{equation}

\subsection{Combined-Slip Tire Model}

Under combined loading, the longitudinal force reduces the lateral capacity of axle $i$ to
\begin{equation}
\bar\mu_iF_{zi}=
\sqrt{(\mu F_{zi})^2-F_{xi}^2},
\qquad i\in\{f,r\}.
\end{equation}
This coupling between longitudinal demand and available cornering force is central to tire-force control near the friction limit~\cite{Fuse2023cornering}. The model computes the Fiala lateral force as
\begin{equation}
F_{yi}=
\begin{cases}
-\bar\mu_iF_{zi}
(3h_i-3|h_i|h_i+h_i^3),& |h_i|<1,\\
-\bar\mu_iF_{zi}\operatorname{sgn}(\alpha_i),& |h_i|\geq1,
\end{cases}
\label{eq:fiala}
\end{equation}
where
\begin{equation}
h_i=\frac{C_{\alpha i}\tan\alpha_i}
{3\bar\mu_iF_{zi}} .
\end{equation}
The implementation sets $C_{\alpha i}=c_iF_{zi}$ and assigns one homogeneous, time-invariant friction coefficient to both axles within a simulation run. Each case in the friction sweep therefore represents a separate constant-$\mu$ surface. Traversal of a spatial friction patch lies outside this sweep.

The controller and simulation plant use the same single-track and Fiala models in all reported tests. The plant excludes body roll and pitch, suspension motion, and left-to-right load transfer. It also omits tire-temperature effects and the wheel or driveline states described above, together with explicit actuator dynamics. These assumptions isolate the lap-wise learning mechanism and restrict the physical scope of the results.

\section{Problem Formulation and Preliminary Analysis}\label{sec:formulation}

\subsection{Operational Definition of Drift Emergence}

We define drift emergence as the autonomous formation of a sustained high-sideslip cornering state as task performance improves under an unchanged closed-loop formulation. During this transition, the control objective remains independent of drift references and rewards, and the plant and controller retain the same structure and feedback mode. A response is classified as drift only when large sideslip coincides with pronounced yaw motion and near-saturation operation of the rear axle. Large $|\beta|$ alone is therefore insufficient for classification.

These requirements define the candidate drift set
\begin{equation}
\begin{split}
\mathcal D=\{\bm{x},\bm{u}:&
|\beta|\geq\beta_{\rm th},\
|r|\geq r_{\rm th},\\
&\rho_r\geq\rho_{\rm th}\}.
\end{split}
\label{eq:drift_set}
\end{equation}
A lap receives a drift label when membership in $\mathcal D$ persists for at least $T_{\rm d}$. The nominal state thresholds are $\beta_{\rm th}=8^\circ$ and $r_{\rm th}=\SI{0.45}{rad/s}$, with $\rho_{\rm th}=0.82$ for rear-axle utilization. The persistence threshold is $T_{\rm d}=\SI{0.30}{s}$, which corresponds to six samples at the \SI{0.05}{s} control period. These study-specific thresholds provide an operational definition and carry no claim of universal physical boundaries. Section~VI evaluates their sensitivity.

\subsection{Continuity Between Grip and Drift Responses}

The continuity argument applies to a general tire-force map. For compactness, the dynamics in Sections~\ref{sec:vehicle-model} and \ref{sec:formulation} are written as
\[
\dot{\bm x}=\bm f(\bm x,\bm u,\bm\theta),
\]
where $\bm f$ denotes the vector field of the planar vehicle model. The vector $\bm\theta$ collects the external model parameters, including the tire-road friction coefficients.
\begin{proposition}[Sufficient regularity for continuous grip-to-drift evolution]
Let the axle lateral forces be $F_{yi}=\Phi_i(\alpha_i,F_{zi},F_{xi},\mu_i)$. Consider a domain in which $v\geq v_{\min}>0$, $F_{zi}>0$, and $|F_{xi}|<\mu_iF_{zi}$. Suppose that each $\Phi_i$ is locally Lipschitz in its arguments, the control input is bounded and piecewise continuous, and the external parameters, including $\mu_i$, vary continuously in time. Then the vector field $\bm f$ is locally Lipschitz in the state for each fixed time. In particular, crossing an adhesion-to-sliding branch boundary of a continuous tire-force map does not itself introduce a discontinuity in $\bm f$, and the associated state trajectory remains continuous.
\end{proposition}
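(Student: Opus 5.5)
The plan is to view $\bm f$ as a composition of locally Lipschitz maps on the stated domain, and then use Carathéodory existence together with local uniqueness to get continuity of trajectories. Write $\bm f(\bm x,\bm u,\bm\theta)$ through the intermediate quantities $(\alpha_f,\alpha_r,F_{zf},F_{zr},F_{xf},F_{xr})$. For fixed $t$, the input $\bm u(t)$ and the parameters $\bm\theta(t)$ are fixed numbers, so only the dependence on $\bm x$ needs Lipschitz control.

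\textbf{Step 1: the intermediate maps.} First I show each intermediate quantity is locally Lipschitz in $\bm x$ on the domain.
\begin{itemize}
\item The slip angles are compositions of $\arctan$ with quotients whose denominator is $v\cos\beta$. I would restrict to a compact neighbourhood where $|\beta|<\pi/2$, so that $v\cos\beta\ge c>0$; this is implicit in the definition of $\alpha_i$. On that neighbourhood the slip angles are smooth in $(v,\beta,r)$.
\item The longitudinal force in \eqref{eq:longitudinal_force} is a $\min$/$\max$ of functions that are Lipschitz in $v$. Here $F_{p,\max}(v)=P_{\max}/\max(v,v_\epsilon)$ is Lipschitz because $v\ge v_{\min}$. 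Since $\min$ and $\max$ preserve Lipschitz continuity, $F_x$, and hence $a_x$ in \eqref{eq:longitudinal}, is locally Lipschitz provided $\sigma$ is smooth.
\item It follows that $F_{zi}$ in \eqref{eq:normal_loads} (including the $0.05mg$ clip, again a $\max$) and $F_{xi}$ in \eqref{eq:axle_force_split} are locally Lipschitz.
\end{itemize}

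\textbf{Step 2: the tire forces and the vector field.} By hypothesis $\Phi_i$ is locally Lipschitz in its arguments. Its composition with the maps from Step 1 therefore gives $F_{yi}$ locally Lipschitz in $\bm x$, since a composition of locally Lipschitz maps is locally Lipschitz on compact sets. The components of \eqref{eq:vehicle} are then built from these forces by sums and products with the bounded smooth terms $\cos\delta$, $\cos(\psi+\beta)$, $\sin(\psi+\beta)$, and by division by $mv$ with $v\ge v_{\min}$. Each of these operations preserves local Lipschitz continuity, so $\bm f$ is locally Lipschitz in $\bm x$ for each fixed $t$.

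\textbf{Step 3: branch crossing.} The statement about crossing an adhesion-to-sliding branch boundary is then immediate. Local Lipschitz continuity of $\Phi_i$ is a global property of the map, so it holds across the switching surface (for Fiala, $|h_i|=1$) as well as away from it. No branch label enters $\bm f$.

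\textbf{Step 4: continuity of trajectories.} The map $t\mapsto \bm f(\bm x,\bm u(t),\bm\theta(t))$ is piecewise continuous and bounded on compact sets. The Carathéodory conditions therefore hold, and a unique absolutely continuous solution exists locally. In particular the trajectory has no jumps, and it remains continuous for as long as it stays in the domain. At switching instants of $\bm u$, $\dot{\bm x}$ may jump, but $\bm x$ does not.

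\textbf{Main obstacle.} I expect the hard part to be the domain bookkeeping rather than the Lipschitz algebra. The constraint $|F_{xi}|<\mu_iF_{zi}$ keeps $\bar\mu_iF_{zi}$ positive, which is what makes the Fiala $\Phi_i$ Lipschitz. One also needs $\cos\beta$ bounded away from zero. Both conditions must be imposed on a compact subset that is invariant over the time interval of interest. I would handle this with a maximal-solution argument: the solution is continuous up to its exit time from any such compact set, which is exactly the local claim made in the proposition.
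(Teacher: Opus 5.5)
Your proposal is correct and follows essentially the same route as the paper. Both arguments compose the smooth slip-angle, load-transfer and axle-split maps with the $\min/\max$ force saturation and the locally Lipschitz tire maps $\Phi_i$ to get a vector field that is locally Lipschitz in the state, then use absolute continuity of the Carath\'eodory solution. Your added bookkeeping is a sound refinement rather than a different proof: keeping $\cos\beta$ bounded away from zero (which the paper tacitly assumes when it calls the slip-angle maps smooth for $v\geq v_{\min}$) and the maximal-solution argument for leaving the domain.
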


\begin{proof}
For $v\geq v_{\min}$, the slip-angle maps and the kinematic terms in \eqref{eq:vehicle} are smooth. The total-force and resistance maps in \eqref{eq:longitudinal_force} and \eqref{eq:longitudinal} combine smooth functions with the locally Lipschitz operators $\min$ and $\max$. The load-transfer map in \eqref{eq:normal_loads} and the prescribed axle-force split in \eqref{eq:axle_force_split} are smooth. Their composition with the locally Lipschitz tire maps $\Phi_i$ yields a vector field that is locally Lipschitz in the state and piecewise continuous in time. The associated Carath\'eodory solution is absolutely continuous and therefore continuous. Assigning a label to part of this trajectory leaves the vector field unchanged.
\end{proof}

The continuity statement concerns branch transitions under continuously varying operating conditions. It excludes an instantaneous change in an external parameter. For example, a sudden jump from $\mu_{\rm low}$ to $\mu_{\rm high}$ may instantaneously change the tire forces and hence $\dot{\bm x}$, although the state $\bm x(t)$ remains continuous.

The combined-slip Fiala map in \eqref{eq:fiala} provides an analytical verification of the tire regularity condition. When $F_{zi}>0$ and $|F_{xi}|<\mu F_{zi}$, the residual lateral capacity satisfies $\bar\mu_iF_{zi}>0$. The force and its first derivative with respect to $\alpha_i$ then match at $|h_i|=1$. On the unsaturated branch, the incremental cornering stiffness is

\begin{equation}
C_{i,t}=-\frac{\partial F_{yi}}{\partial\alpha_i}
=C_{\alpha i}\sec^2\alpha_i(1-|h_i|)^2
\label{eq:tangent}
\end{equation}
and approaches zero continuously at sliding onset. As $h_i\rightarrow1^{-}$, the polynomial
$3h_i-3|h_i|h_i+h_i^3\rightarrow1$,
equals the saturated-branch force value. The same argument applies as $h_i\rightarrow-1^{+}$. Equation~\eqref{eq:tangent} also approaches the zero derivative of the saturated branch. Thus, the Fiala lateral force is $C^1$ with respect to slip angle at this boundary.

The domain restriction is essential. At $F_{zi}=0$ or $|F_{xi}|=\mu_iF_{zi}$, the residual lateral capacity vanishes and $h_i$ degenerates. The proposition also excludes an instantaneous jump in $\mu_i$, which can create a discontinuity in tire force and the vehicle vector field even though the state remains continuous. A surface described by a continuous $\mu_i(s,t)$ preserves the argument. A left-to-right split-$\mu$ condition requires a four-wheel model and lies outside the present scope.

For the constant-$\mu$ tests considered here, \eqref{eq:vehicle} retains one continuous model across grip and drift. Continuous variation in speed and steering changes tire utilization continuously, allowing $\beta$ and $r$ to evolve from small to large values along one state trajectory. Equation~\eqref{eq:drift_set} classifies this response independently of the dynamic mode.

Figure~\ref{fig:mechanism}(a) evaluates this continuity argument under combined longitudinal and lateral loading. Define longitudinal utilization as $\chi_x=|F_x|/(\mu F_z)$. The upper plot reports normalized lateral force for $\chi_x=0$, 0.5, and 0.8; open circles mark the onset of the saturated branch. The lower plot reports the corresponding normalized incremental cornering stiffness. Increasing $\chi_x$ lowers the attainable lateral-force plateau and advances sliding onset to a smaller tire slip. Each force curve remains continuous, and its incremental stiffness approaches zero smoothly. Combined loading therefore changes the saturation capacity and its location within one continuous dynamic mode.

Figure~\ref{fig:mechanism}(b) provides trajectory-level evidence in the $(\beta,r)$ phase plane. Every lap uses the same vehicle model and controller structure. Colored curves show continuous within-lap responses. The open circles connected by a black dashed line identify one fixed track location across laps. The operating point moves from the low-sideslip region on Lap~3 through the intermediate laps and enters the high-sideslip, high-yaw-rate region on Lap~11. The plant retains one model, and the controller retains one feedback channel throughout this progression. The dotted threshold lines specify necessary conditions in the post hoc criterion and carry no dynamic-boundary interpretation.

\begin{figure}[!t]
  \centering
  \includegraphics[width=0.97\columnwidth]{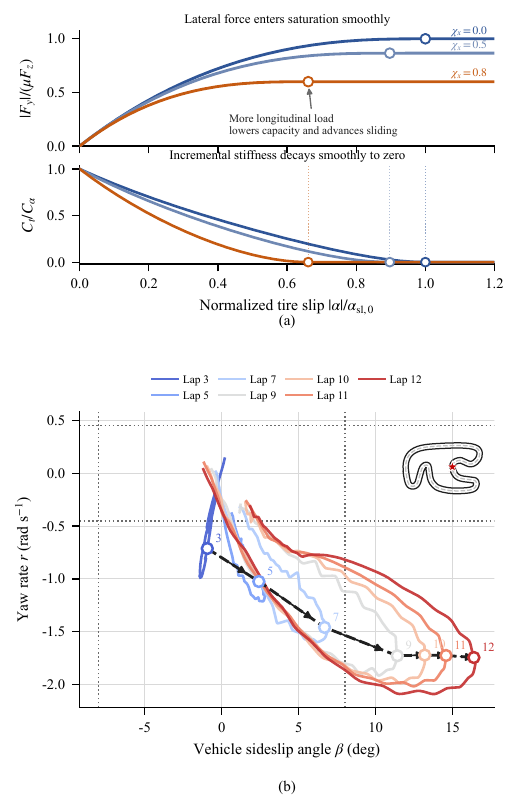}
    \caption{Continuous grip-to-drift evolution. (a) Normalized Fiala lateral force (top) and incremental cornering stiffness (bottom) for $\chi_x=0$, 0.5, and 0.8; open circles mark sliding onset. (b) $(\beta,r)$ trajectories through the critical corner for Laps~3, 5, 7, and 9 through 12 at $\mu=0.6$. Open circles linked by the dashed line identify the fixed location $s/L=0.467$, marked by a star on the inset track. Gray dotted lines denote the $|\beta|$ and $|r|$ thresholds. The inset track is rotated and reflected for display.}
  \label{fig:mechanism}
\end{figure}

\subsection{Conditionality of Drift Emergence}
\begin{figure*}[ht]
  \centering
  \includegraphics[width=\textwidth]{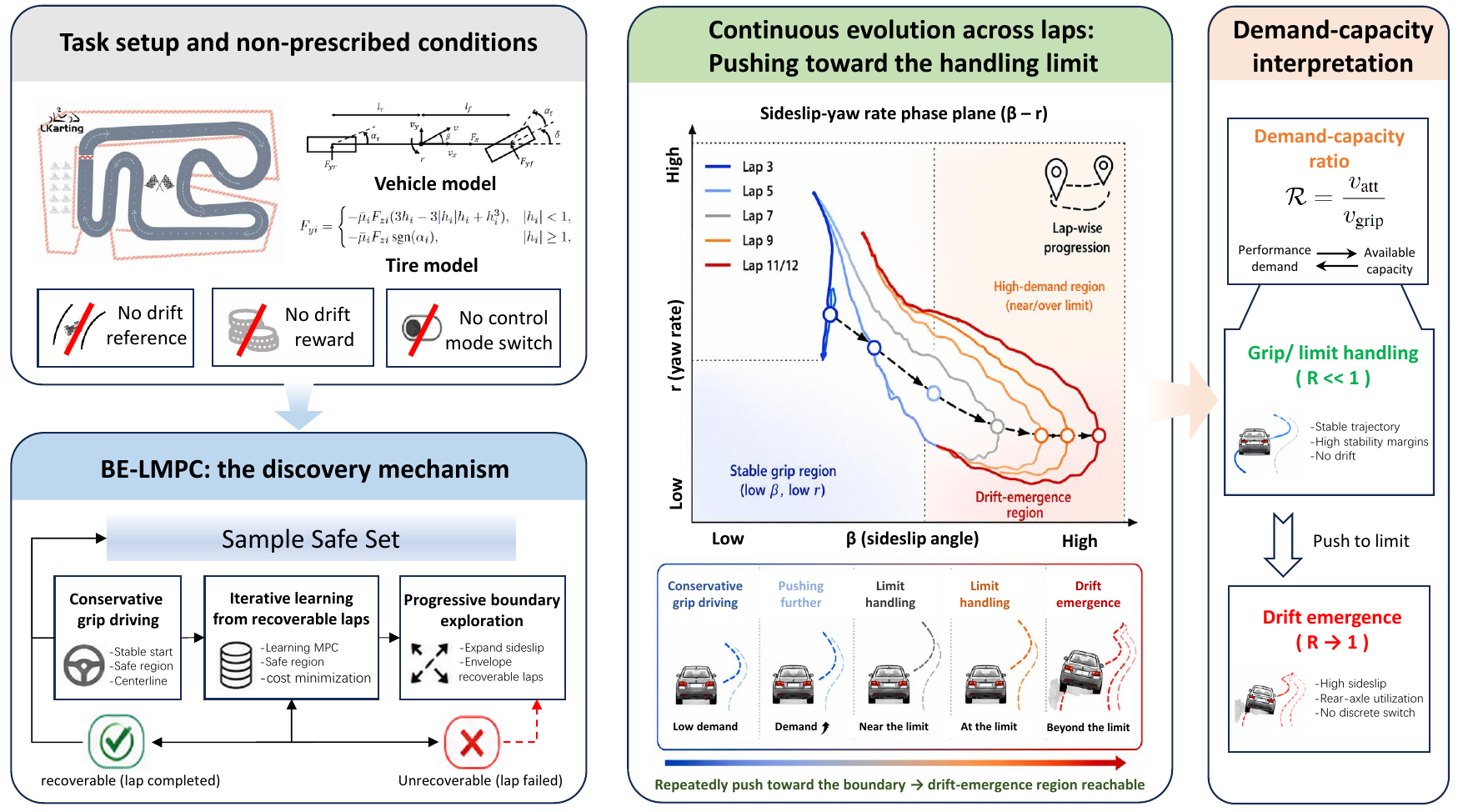}
    \caption{BE-LMPC framework for repetitive lap-time improvement under a fixed vehicle model and global speed bound. Recoverable laps update the empirical safe set, and the recovery rule expands the sideslip and yaw-rate bounds. The demand-to-capacity ratio characterizes the resulting grip or drift response.}
    \label{fig:framework}
\end{figure*}
Continuous evolution alone does not imply drift under every operating condition. For a path with local curvature $\kappa$, steady grip operation in the absence of longitudinal force requires the normalized lateral demand
\begin{equation}
\Lambda_y=\frac{v^2|\kappa|}{\mu g},
\label{eq:demand}
\end{equation}
The associated ideal grip-speed boundary is
\begin{equation}
v_{\rm grip}=\sqrt{\frac{\mu g}{|\kappa|}}.
\label{eq:grip_speed}
\end{equation}
Longitudinal force further reduces the available lateral capacity. Equation~\eqref{eq:longitudinal_force} limits the attainable speed $v_{\rm att}$ through propulsion and braking. Track length and the control envelope impose additional limits. The boundary reachability ratio is
\begin{equation}
\mathcal R=\frac{v_{\rm att}}{v_{\rm grip}}.
\end{equation}
When $\mathcal R\ll1$, task-level constraints dominate performance and the closed-loop trajectory remains far from the tire-capacity boundary. As $\mathcal R$ approaches unity, tire utilization increases while incremental cornering stiffness decreases, making the drift-emergence region reachable. A lower $\mu$ reduces $v_{\rm grip}$, whereas more aggressive iterative improvement increases $v_{\rm att}$. These mechanisms increase boundary reachability through available capacity and performance demand, respectively.

\section{Boundary-Exploration Learning Model Predictive Control}\label{sec:controller}

Figure~\ref{fig:framework} summarizes the closed-loop learning architecture. BE-LMPC provides a controlled mechanism for observing the vehicle response near an expanding handling boundary. The drift detector operates offline and remains outside the feedback law.

\subsection{Repetitive Task and Empirical Safe Set}

Consider a vehicle that repeatedly performs a lap-time-improvement task on a closed track. A completed trajectory with $N_j$ samples on Lap~$j$ is
\begin{equation}
\mathcal T^j=
\{\bm{x}_k^j,\bm{u}_k^j,s_k^j\}_{k=0}^{N_j-1},
\end{equation}
where $s$ denotes track arc length. Each sample has the empirical remaining-step cost
\begin{equation}
Q_k^j=N_j-1-k .
\label{eq:cost_to_go}
\end{equation}
In LMPC, previously completed iterations provide empirical terminal experience for the repetitive task~\cite{rosolia2018,rosolia2020}. A state recorded on an accepted lap has an observed feasible continuation from that state to lap completion. Historical states therefore provide data-supported terminal candidates for subsequent iterations. In this sense, the term \emph{safe} is empirical. It reflects successful closed-loop experience rather than a formal control-invariance certificate.

The present implementation uses a local approximation instead of all recorded states. At each control instant, it selects 16 neighboring samples from each of the two most recent accepted laps. With index set $\mathcal I_t$, their convex combinations define the local empirical safe set
\begin{equation}
\mathcal{CS}_t=
\left\{
\sum_{i\in\mathcal I_t}\lambda_i\bm{x}_i:
\lambda_i\geq0,\ 
\sum_{i\in\mathcal I_t}\lambda_i=1
\right\}.
\label{eq:safe_set}
\end{equation}
This local construction guides the terminal state toward regions supported by completed laps while keeping the online optimization compact. It does not establish that every point in $\mathcal{CS}_t$ belongs to a strictly control-invariant set.
To improve numerical conditioning, the implementation shifts the costs separately within the selected neighborhood of each source lap. Specifically, $\widetilde Q_i=Q_i-\min_{p\in\mathcal I_t^{j(i)}}Q_p\geq0$, where $j(i)$ identifies the lap containing sample $i$. The terminal progress cost is
\begin{equation}
Q_{\rm term}(\bm\lambda)=
\sum_{i\in\mathcal I_t}\lambda_i\widetilde Q_i\geq0.
\label{eq:terminal_cost}
\end{equation}
A high-weight soft constraint places the predicted terminal state near $\mathcal{CS}_t$. The local cost favors progress within the selected track neighborhood.

\subsection{Boundary Exploration and Whole-Lap Acceptance}

BE-LMPC adjusts the admissible handling envelope from lap to lap:
\begin{equation}
\mathcal E(\eta_j)=
\left\{
\bm{x}:\begin{aligned}
&0\leq v\leq v_{\max},\quad
|\beta|\leq\bar\beta(\eta_j),\\
&|r|\leq\bar r(\eta_j)
\end{aligned}
\right\},
\end{equation}
where
\begin{equation}
\bar z(\eta_j)=
\bar z_0+\eta_j(\bar z_{\max}-\bar z_0),
\quad z\in\{\beta,r\}.
\end{equation}
The speed bound $v_{\max}$ remains fixed across all laps and is independent of $\eta_j$. At each track location, nonlinear vehicle dynamics and track constraints combine with the empirical terminal cost to determine the realized speed. This design matches the fixed global speed bound in the original C++ LMPC. It also distinguishes learned spatial speed allocation from a lap-dependent scalar speed constraint. The boundary level $\eta_j\in[0,1]$ is updated after each complete lap:
\begin{equation}
\eta_{j+1}=
\operatorname{clip}_{[0,1]}
\begin{cases}
\eta_j+\Delta_+, & \mathcal T^j\text{ is recoverable},\\
\eta_j-\Delta_-, & \text{otherwise}.
\end{cases}
\label{eq:envelope}
\end{equation}
A completed lap is accepted when its minimum signed track margin is at least $-\SI{0.02}{m}$ and its solver-failure fraction does not exceed 0.15. Acceptance also requires peak combined utilization no greater than 1.08 and finite recorded states. Only accepted laps enter the subsequent empirical safe set. These implementation-level tolerances accommodate discretization and local-linearization errors while keeping failed exploratory trajectories out of the terminal experience data.

\subsection{Online Optimization}

At control instant $t$, the algorithm linearizes the nonlinear model once along the warm-start trajectory retained from the preceding MPC solution. Zero-order-hold discretization gives
\begin{equation}
\bm{x}_{k+1|t}=\bm A_{k|t}\bm{x}_{k|t}
+\bm B_{k|t}\bm{u}_{k|t}+\bm c_{k|t}.
\label{eq:affine_prediction}
\end{equation}
The resulting linear time-varying quadratic program (QP) optimizes the predicted states $\bm{x}_{0:N|t}$ and inputs $\bm{u}_{0:N-1|t}$. Its remaining decision variables are the scalar track slacks $\epsilon_{0:N}$, convex weights $\bm\lambda$, and terminal slack $\bm e_N\in\mathbb R^6$. The objective is
\begin{equation}
\begin{split}
\min\quad
&\frac{1}{2}\sum_{k=0}^{N-1}
\bm u_{k|t}^{\mathsf T}\bm R\bm u_{k|t}
+Q_{\rm term}(\bm\lambda)\\
&+\frac{q_s}{2}\sum_{k=1}^{N}\epsilon_{k|t}^2
+\frac{q_N}{2}\|\bm e_N\|_2^2,
\end{split}
\label{eq:objective}
\end{equation}
where $\bm R=\operatorname{diag}(1.5,18)$, $q_s=3000$, and $q_N=800$. The terminal term is a numerical soft-constraint penalty that encourages the predicted terminal state to remain close to the local empirical safe set. Accordingly, $q_N$ is a numerical regularization weight rather than a physical metric that assigns equal significance to the heterogeneous state components. The same value is retained throughout all simulations and method comparisons. The QP is subject to
\begin{subequations}
\label{eq:ocp_constraints}
\begin{align}
\bm x_{0|t}&=\bm x_t,\qquad
\bm x_{k|t}\in\mathcal E(\eta_j),
\label{eq:ocp_initial}\\
\bm x_{k+1|t}&=\bm A_{k|t}\bm x_{k|t}
+\bm B_{k|t}\bm u_{k|t}\notag\\
&\quad+\bm c_{k|t},
\label{eq:ocp_dynamics}\\
-a_{b,\max}\leq a_{{\rm cmd},k|t}&\leq a_{d,\max},\notag\\
|\delta_{k|t}|&\leq\delta_{\max},
\label{eq:ocp_input}\\
|\delta_{0|t}-\delta_{t-1}|&\leq\dot\delta_{\max}T_s,\notag\\
|\delta_{k|t}-\delta_{k-1|t}|&\leq\dot\delta_{\max}T_s,
\label{eq:steering_rate}\\
\underline b_{k|t}-\epsilon_{k|t}
&\leq\bm n_{k|t}^{\mathsf T}\bm p_{k|t}
\leq\overline b_{k|t}+\epsilon_{k|t},
\label{eq:ocp_track}\\
-\bm e_N&\leq\bm x_{N|t}
-\sum_{i\in\mathcal I_t}\lambda_i\bm x_i\notag\\
&\leq\bm e_N,
\label{eq:ocp_terminal}\\
\epsilon_{k|t}&\geq0,\quad \bm e_N\geq\bm0,\quad
\lambda_i\geq0,\notag\\
&\sum_{i\in\mathcal I_t}\lambda_i=1.
\label{eq:ocp_nonnegative}
\end{align}
\end{subequations}
Because the terminal relation in \eqref{eq:ocp_terminal} is imposed componentwise, $\bm e_N$ is the residual used to soften terminal-set matching rather than a physical state-performance index.
Here, $\bm p=[X,Y]^\mathsf T$. The vector $\bm n_{k|t}$ and bounds $\underline b_{k|t},\overline b_{k|t}$ define the locally linearized track corridor. The nonlinear model used to obtain \eqref{eq:affine_prediction} embeds the force-saturation and resistance relations from \eqref{eq:longitudinal_force} to \eqref{eq:longitudinal}. These relations therefore do not appear as separate algebraic QP constraints. The applied plant remains nonlinear, whereas the prediction problem uses one local affine approximation per control update. OSQP solves the resulting QP\cite{stellato2020}.

The formulation uses no state-reference stage cost or separate speed controller. The empirical progress cost in \eqref{eq:terminal_cost} creates lap-time pressure, and the quadratic input term regularizes acceleration and steering. Equations~\eqref{eq:objective} and \eqref{eq:ocp_constraints} contain no reference for $\beta$ or $r$. They also contain no countersteering or saturation reward and no drift label. Satisfaction of \eqref{eq:drift_set} can therefore arise only through the interaction of performance demand with vehicle dynamics and boundary exploration.

\begin{algorithm}[t]
\caption{Repetitive racing with recoverable boundary updates in BE-LMPC}
\label{alg:be_lmpc}
\begin{algorithmic}[1]
\STATE \textbf{Input:} initial completed-lap set $\mathcal A^0$, initial boundary level $\eta_0$, and steps $\Delta_+,\Delta_-$
\FOR{learning lap $j=1,2,\ldots,J$}
  \STATE Construct $\mathcal{CS}_t$ and $Q_{\rm term}$ from the two most recent accepted laps
  \FOR{within-lap control instant $k=0,1,\ldots$}
    \STATE Solve \eqref{eq:objective} within $\mathcal E(\eta_j)$
    \STATE Apply the first input and record $\bm{x}_k^j,\bm{u}_k^j$
  \ENDFOR
  \STATE Check the track margin and solver status; verify friction utilization and state finiteness
  \IF{the complete lap is recoverable}
    \STATE $\mathcal A^j\leftarrow\mathcal A^{j-1}\cup\{\mathcal T^j\}$
    \STATE $\eta_{j+1}\leftarrow\min(1,\eta_j+\Delta_+)$
  \ELSE
    \STATE $\mathcal A^j\leftarrow\mathcal A^{j-1}$; reject the failed trajectory
    \STATE $\eta_{j+1}\leftarrow\max(0,\eta_j-\Delta_-)$
  \ENDIF
  \STATE Evaluate the drift criterion offline for reporting only
\ENDFOR
\end{algorithmic}
\end{algorithm}

\begin{table*}[!t]
\centering
\caption{Principal vehicle and controller parameters}
\label{tab:parameters}

\small
\renewcommand{\arraystretch}{1.15}
\setlength{\tabcolsep}{4pt}

\begin{tabularx}{\textwidth}{
  @{}
  >{\raggedright\arraybackslash}X c l
  @{\hspace{1.5em}}
  >{\raggedright\arraybackslash}X c l
  @{}
}
\toprule
\multicolumn{3}{c}{Vehicle parameters}
&
\multicolumn{3}{c}{Controller parameters}
\\
\cmidrule(lr){1-3}\cmidrule(lr){4-6}
Parameter & Symbol & Value
&
Parameter & Symbol & Value
\\
\midrule

Vehicle mass
& $m$
& \SI{3.2}{kg}
&
Power regularization
& $v_\epsilon$
& \SI{0.50}{m/s}
\\

Yaw moment of inertia
& $I_z$
& \SI{0.04}{kg.m^2}
&
Prediction horizon
& $N$
& 25
\\

CG-to-axle distances
& $l_f/l_r$
& 0.16/0.17 m
&
Control period
& $T_s$
& \SI{0.05}{s}
\\

CG height
& $h$
& \SI{0.08}{m}
&
Global speed bound
& $v_{\max}$
& \SI{10.0}{m/s}
\\

Cornering stiffness/load
& $c_{\alpha f}/c_{\alpha r}$
& 2.3/2.3 rad$^{-1}$
&
Maximum steering angle
& $\delta_{\max}$
& \SI{0.41}{rad}
\\

Drive/brake force limit
& $F_{d,\max}/F_{b,\max}$
& 12.7/15.9 N
&
Maximum steering rate
& $\dot{\delta}_{\max}$
& \SI{1.0}{rad/s}
\\

Maximum wheel power
& $P_{\max}$
& \SI{24}{W}
&
Sideslip (init./max.)
& $\bar{\beta}_0/\bar{\beta}_{\max}$
& $7^\circ/38^\circ$
\\

Roll-resistance coefficient
& $c_{\rm rr}$
& 0.018
&
Yaw-rate (init./max.)
& $\bar r_0/\bar r_{\max}$
& 2.4/6.0 rad/s
\\

Quadratic drag coeff.
& $c_d$
& \SI{0.045}{N.s^2/m^2}
&
Envelope steps (+/-)
& $\Delta_+/\Delta_-$
& 0.20/0.10
\\

\bottomrule
\end{tabularx}
\end{table*}

\begin{table}[!t]
\centering
\caption{Simulation test matrix}
\label{tab:cases}
\small
\begin{tabular}{p{0.19\columnwidth}p{0.24\columnwidth}p{0.37\columnwidth}}
\toprule
Test & Condition & Purpose\\
\midrule
Main case & BE-LMPC, $\mu=0.6$ & Evaluate lap-wise evolution from grip to drift\\
Friction sweep & $\mu=0.6$, 0.8, 1.0, 1.2 & Identify friction conditions associated with emergence\\
Method comparison & BE-LMPC, grip constraint, fixed envelope & Isolate the effects of boundary exploration and drift freedom\\
\bottomrule
\end{tabular}
\end{table}
Algorithm~\ref{alg:be_lmpc} summarizes the lap-wise learning procedure. Online control uses the empirical safe set together with the dynamic constraints and current handling envelope. After each completed lap, the offline detector evaluates drift for analysis. Its label affects neither the control input nor the next boundary update. The recovery filter remains part of the closed-loop learning mechanism because it determines whether the experience data retain an exploratory trajectory.

\section{Simulation Tests and Evaluation}\label{sec:simulation}

\subsection{Track, Vehicle, and Numerical Settings}

The simulations use a closed track of length \SI{80.48}{m} with a \SI{0.75}{m} half-width on each side. Two conservative path-following laps at \SI{1.2}{m/s} form the initial experience set, after which Laps~3 through 12 are evaluated. The controller operates at \SI{0.05}{s}. A fourth-order Runge--Kutta scheme integrates the nonlinear plant with a \SI{0.005}{s} step. Table~\ref{tab:parameters} lists the vehicle and controller parameters. The controller linearizes the same vehicle and Fiala tire models used by the plant. Consequently, the tests assess closed-loop mechanism consistency under matched modeling and do not evaluate robustness to structural model mismatch. Every learning lap uses the same global speed bound of \SI{10}{m/s}. This value lies well above the observed peak speeds and therefore cannot create a lap-dependent constant-speed plateau.

\subsection{Simulation Test Matrix}

Table~\ref{tab:cases} organizes the tests according to the continuity interpretation and the boundary-exploration mechanism.

The grip-constrained method uses the fixed global speed bound and empirical safe set of BE-LMPC, but it fixes the predicted $|\beta|$ bound at $7^\circ$. This value equals the conservative initial envelope and lies below the study-specific detector threshold of $8^\circ$; it is not presented as a universal grip limit. Because the QP enforces the prediction bound on a locally affine model at discrete samples, the nonlinear plant can exceed it between samples. This mechanism explains the final measured peak of $8.62^\circ$. The fixed-envelope method retains the initial predicted bounds on both sideslip and yaw rate. Each method uses the same empirical terminal cost and can therefore improve spatial speed allocation with no manually assigned speed plateau. Equation~\eqref{eq:drift_set} supplies an offline drift label after simulation, and that label never enters the controller.

\subsection{Evaluation Metrics and Reporting Criteria}

All comparisons use the same initial experience laps and prediction horizon. Solver settings and deterministic vehicle parameters also remain fixed. The primary performance metric is lap time. State-response metrics comprise peak and root-mean-square sideslip together with peak rear-axle utilization. Additional reported quantities include the sustained-drift fraction and duration, minimum track margin, and recoverability. A lap receives a drift label only when every condition in \eqref{eq:drift_set} persists for $T_{\rm d}$, which excludes an isolated high-sideslip transient. Event matrices use G for a recoverable grip lap and D for a recoverable drift lap; X denotes a rejected lap. These deterministic simulations evaluate mechanism consistency and method ablations. They provide no basis for statistical-significance claims.

\section{Results and Discussion}\label{sec:results}

\subsection{Drift Emerges Continuously During Low-Friction Learning}

Figure~\ref{fig:low_friction} reports the main test at $\mu=0.6$ and traces the response from conservative grip driving to drift emergence. As shown in Fig.~\ref{fig:low_friction}(a), lap time decreases from \SI{49.95}{s} on Lap~3 to \SI{25.50}{s} on Lap~12. Lap~3 remains in conventional grip operation, with a maximum sideslip angle of $5.1^\circ$ and maximum rear-axle utilization of 0.446. The handling envelope reaches full expansion on Lap~8, whereas drift first appears on Lap~11. During the intervening laps, the lap time decreases from \SI{28.80}{s} on Lap~8 to \SI{26.80}{s} on Lap~10 under an unchanged global speed bound and handling-envelope level. This ordering separates continued performance improvement from further envelope expansion.

Sustained drift is first detected on Lap~11, whose lap time is \SI{26.15}{s}. Its maximum sideslip angle reaches $14.6^\circ$, and maximum rear-axle utilization reaches 0.859. The landscape-oriented track map in Fig.~\ref{fig:low_friction}(b), reflected about its display $x$-axis, localizes this event to the critical corner. The corresponding segment on non-drifting Lap~10 remains outside the joint criterion. Figure~\ref{fig:low_friction}(c) shows a staged approach to the boundary. The sideslip ratio first crosses its threshold on Lap~7, followed by full envelope expansion on Lap~8. Rear-axle utilization crosses its threshold on Lap~10, and the complete criterion becomes sustained on Lap~11. Lap~12 reaches $16.5^\circ$ sideslip and 0.894 rear-axle utilization, with every displayed lap remaining recoverable. The sideslip root-mean-square value increases from $4.37^\circ$ on Lap~10 to $4.96^\circ$ on Lap~11 and $5.55^\circ$ on Lap~12. The joint criterion persists for \SI{0.70}{s} and \SI{1.00}{s} on the two drifting laps, respectively. These distributional and duration measures show that the conclusion does not depend on a single peak sample.

The event ordering in Fig.~\ref{fig:low_friction}(c) distinguishes admissibility from emergence. Full envelope expansion makes high-sideslip states admissible, after which the controller completes three additional recoverable laps before the first drift event. The event therefore occurs under fixed envelope parameters.

\begin{figure*}[!t]
  \centering
  \includegraphics[width=\textwidth]{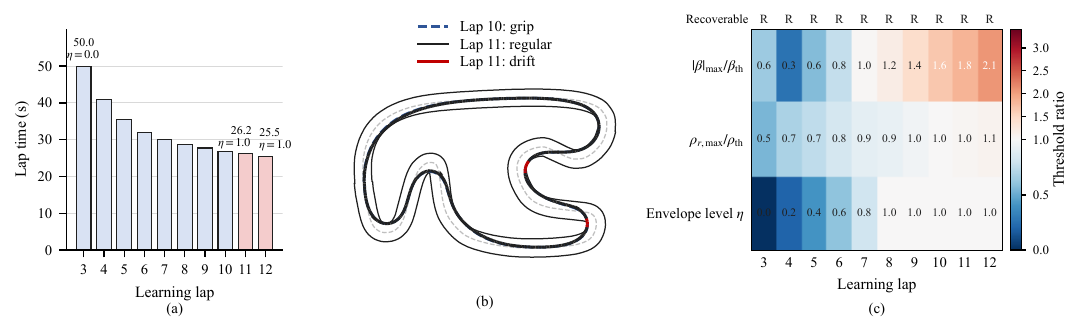}
  \caption{Lap-wise drift emergence at $\mu=0.6$. (a) Lap times for Laps~3 through 12; blue and red bars denote grip and drift, respectively, and labels give the envelope level. (b) Track trajectories for grip Lap~10 and first-drift Lap~11; red segments satisfy the joint criterion. The map is rotated and reflected for display. (c) Threshold-normalized sideslip and rear-axle utilization with the envelope level; R and X denote accepted and rejected laps.}
  \label{fig:low_friction}
\end{figure*}

\subsection{State and Control Responses Before and After Emergence}

To isolate the transition under fixed operating conditions, Fig.~\ref{fig:state_control} compares adjacent Laps~10 and 11 from the same $\mu=0.6$ test. BE-LMPC generates both laps with the same global speed bound and fully expanded handling envelope. Lap~10 remains a grip lap, whereas Lap~11 is the first detected drift lap. Every panel uses elapsed time, and light-red shading marks the intervals on Lap~11 that satisfy the complete joint criterion.

Figure~\ref{fig:state_control}(a) shows spatial speed variation throughout each lap. Speed ranges from \SIrange{2.33}{2.80}{m/s} on Lap~10 and from \SIrange{2.37}{2.88}{m/s} on Lap~11, with standard deviations of \SI{0.132}{m/s} and \SI{0.142}{m/s}, respectively. Their peak speeds differ by only \SI{0.079}{m/s}. Figures~\ref{fig:state_control}(b) and (c) show substantially larger sideslip and yaw-rate excursions on Lap~11. Its sideslip spans from $-12.8^\circ$ to $14.6^\circ$, and both state thresholds are exceeded during the shaded joint-detection intervals. Thus, the first drift event occurs at a similar speed level but with a different nonlinear attitude response.

Figure~\ref{fig:state_control}(d) shows that the front-steering input satisfies the $\pm0.41$-rad constraint on both laps. The input ranges from $-18.25^\circ$ to $17.78^\circ$ on Lap~10 and from $-17.55^\circ$ to $17.88^\circ$ on Lap~11. The attitude change on Lap~11 arises from the coupled nonlinear response to the learned trajectory under admissible inputs; the controller uses no separate drift channel. Figure~\ref{fig:state_control}(e) compares commanded and realized longitudinal acceleration after force and resistance constraints. Realized acceleration ranges from \SIrange{-0.257}{0.160}{m/s^2} on Lap~10 and from \SIrange{-0.281}{0.141}{m/s^2} on Lap~11. Negative acceleration coincides with corner entry and attitude adjustment, whereas positive acceleration occurs during corner exit and straight-line recovery. These responses confirm spatial speed variation and nonzero longitudinal acceleration rather than a constant-speed or zero-input construction.

\begin{figure}[!t]
  \centering
  \includegraphics[width=0.97\columnwidth]{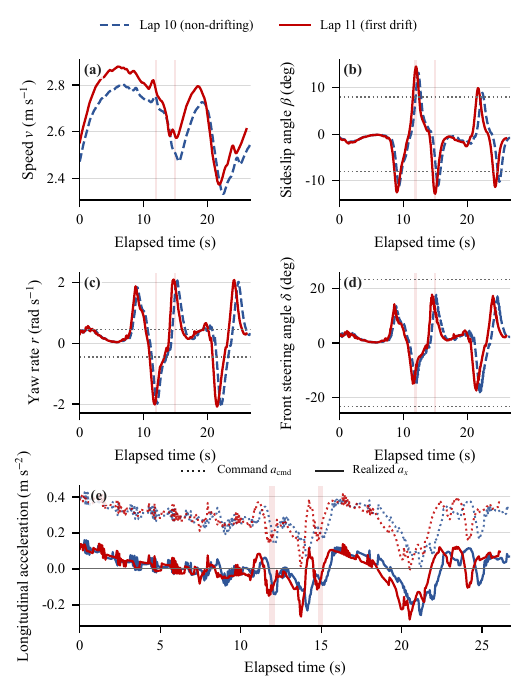}
  \caption{State and control responses for grip Lap~10 and first-drift Lap~11 at $\mu=0.6$: (a) speed, (b) sideslip angle, (c) yaw rate, (d) steering angle, and (e) longitudinal acceleration. Gray dotted lines mark the steering constraint. In (e), dotted and solid curves denote $a_{\rm cmd}$ and realized $a_x$, respectively. Light-red shading marks intervals satisfying the joint drift criterion.}
  \label{fig:state_control}
\end{figure}

\subsection{Sensitivity of the Drift Criterion}

The interpretation of drift emergence should remain stable under reasonable changes in the empirical criterion. We therefore relabel the trajectories from the $\mu=0.6$ test offline by varying $\beta_{\rm th}$ and $\rho_{\rm th}$, followed by $T_{\rm d}$, while holding the yaw-rate threshold fixed. The controller is not rerun. Table~\ref{tab:detector_sensitivity} reports the resulting first-drift laps.
\begin{table}[ht]
\centering
\caption{Effect of drift-criterion parameters on the first detected drift lap}
\label{tab:detector_sensitivity}
\small
\setlength{\tabcolsep}{2pt}
\begin{tabular}{lll}
\toprule
Varied parameter & Values & \shortstack{First drift lap}\\
\midrule
$\beta_{\rm th}$ / ($^\circ$) & 6, 8, 10, 12 & 11, 11, 11, 11\\
$\rho_{\rm th}$ & 0.75, 0.82, 0.90 & 9, 11, --\\
$T_{\rm d}$ / s & 0.20, 0.30, 0.50 & 10, 11, 12\\
\bottomrule
\end{tabular}
\end{table}
Increasing $\beta_{\rm th}$ from $6^\circ$ to $12^\circ$ leaves the first drift detection on Lap~11, so one sideslip threshold does not govern the conclusion. Reducing the rear-utilization threshold to 0.75 advances detection to Lap~9. At 0.90, no sustained event occurs within the ten learning laps. A persistence threshold of $T_{\rm d}=0.20$~s yields first detection on Lap~10, whereas $T_{\rm d}=0.50$~s delays it to Lap~12. Here, $T_{\rm d}$ serves as a temporal persistence filter rather than a vehicle constant with universal bounds. The joint criterion is therefore a conservative operational definition of sustained drift near rear-axle saturation. Threshold selection changes the event label but leaves unchanged the continuous entry into the high-sideslip, high-yaw-rate region shown in Fig.~\ref{fig:mechanism}.

\subsection{Tire-Road Friction Determines Whether the Drift Branch Is Reached}
\begin{figure*}[!t]
  \centering
  \includegraphics[width=\textwidth]{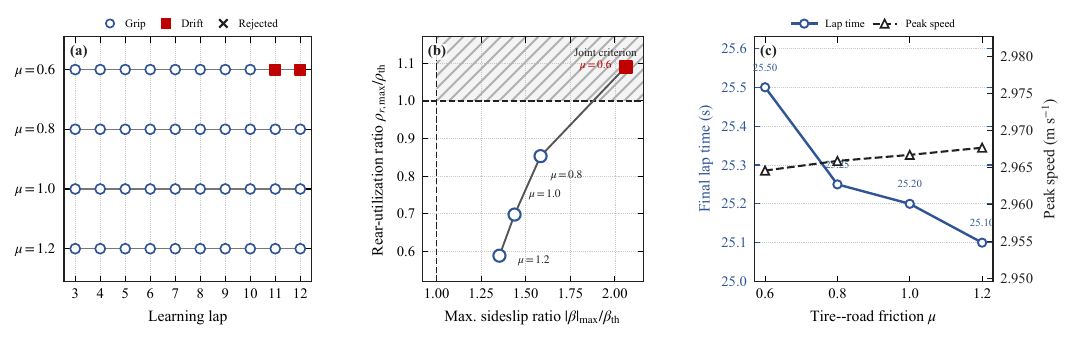}
  \caption{Effect of tire-road friction on drift emergence. (a) Lap-wise outcomes: recoverable grip (open circles), recoverable drift (filled squares), and rejection (crosses). (b) Final-lap maximum sideslip and rear-axle utilization normalized by their thresholds; the hatched region satisfies both criteria. (c) Final recoverable lap time (solid circles, left axis) and peak speed (dashed triangles, right axis).}
  \label{fig:friction}
\end{figure*}
Figure~\ref{fig:friction} and Table~\ref{tab:friction_results} summarize the friction sweep. The event trajectories in Fig.~\ref{fig:friction}(a) show drift only at $\mu=0.6$, beginning on Lap~11. Every lap from 3 to 12 remains in grip operation for $\mu=0.8$, 1.0, and 1.2, and every lap is accepted. Figure~\ref{fig:friction}(c) compares final recoverable lap time and peak speed across the friction cases. Final lap times range from \SI{25.10}{s} to \SI{25.50}{s}, while every peak speed is approximately \SI{2.96}{m/s}. Thus, manually assigned speed plateaus do not explain the different event labels.

\begin{table*}[!t]
\centering
\caption{Quantitative drift-emergence results under different friction conditions}
\label{tab:friction_results}
\small
\setlength{\tabcolsep}{3pt}
\begin{tabular}{ccccccc}
\toprule
$\mu$ & \shortstack{First drift lap} & \shortstack{Final recoverable lap} & \shortstack{Lap time (s)} &
\shortstack{Max. $|\beta|$/ ($^\circ$)} & \shortstack{Max.$\rho_r$} & Final state\\
\midrule
0.6 & 11 & 12 & 25.50 & 16.49 & 0.894 & Drift\\
0.8 & -- & 12 & 25.25 & 12.66 & 0.699 & Limit grip\\
1.0 & -- & 12 & 25.20 & 11.50 & 0.572 & Limit grip\\
1.2 & -- & 12 & 25.10 & 10.82 & 0.483 & Limit grip\\
\bottomrule
\end{tabular}
\end{table*}
The final-lap criterion plane in Fig.~\ref{fig:friction}(b) explains the event difference. Maximum rear-axle utilization decreases from 0.894 at $\mu=0.6$ to 0.699 at $\mu=0.8$, then to 0.572 and 0.483 as friction increases further. The corresponding maximum sideslip angles are $12.7^\circ$, $11.5^\circ$, and $10.8^\circ$. Every case crosses the normalized sideslip threshold. Only the $\mu=0.6$ case also crosses the normalized rear-utilization threshold and enters the hatched joint-criterion region. Large sideslip alone is therefore insufficient for the drift classification.

The low-friction trajectories in Fig.~\ref{fig:mechanism}(b) consequently enter the high-sideslip, high-yaw-rate region on Lap~11. In contrast, Fig.~\ref{fig:friction}(b) shows that the higher-friction cases remain below the rear-utilization threshold. Within the tested speed range, BE-LMPC reaches the complete drift set only when the road-capacity boundary is sufficiently low.

\subsection{Comparison With Grip-Constrained Learning}
\begin{figure*}[ht]
  \centering
  \includegraphics[width=\textwidth]{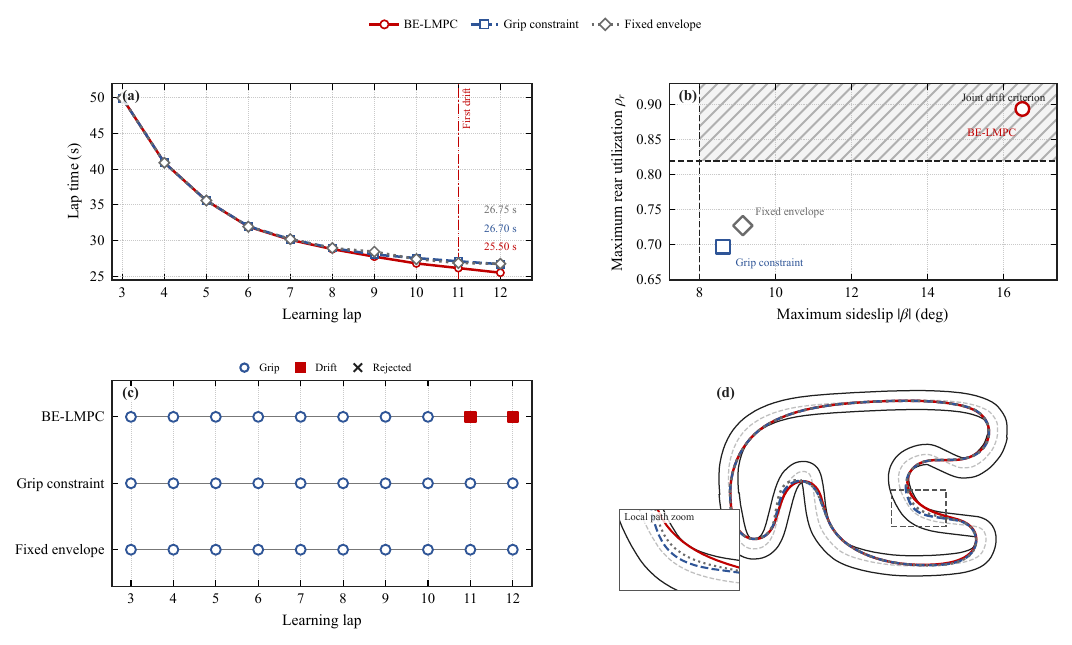}
  \caption{Comparison at $\mu=0.6$ of BE-LMPC, grip-constrained sideslip, and a fixed envelope. (a) Lap-time convergence; the dash-dotted vertical line marks the first BE-LMPC drift lap. (b) Final-lap sideslip and rear-utilization criterion plane; the hatched region satisfies both thresholds. (c) Lap outcomes: recoverable grip (open circles), recoverable drift (filled squares), and rejection (crosses). (d) Final trajectories with an enlargement at the location of maximum path separation.}
  \label{fig:method}
\end{figure*}

\begin{table*}[ht]
\centering
\caption{Method ablation results at $\mu=0.6$}
\label{tab:method_results}
\small
\setlength{\tabcolsep}{1.5pt}
\begin{tabular}{lccccccc}
\toprule
Method & \shortstack{Envelope\\expansion} & \shortstack{Sideslip\\bound} & \shortstack{Final recoverable\\lap} &
\shortstack{Lap time\\/ s} & \shortstack{Max. $|\beta|$\\/ ($^\circ$)} & \shortstack{Max.\\$\rho_r$} & Drift\\
\midrule
BE-LMPC & Adaptive & Up to $38^\circ$ & 12 & 25.50 & 16.49 & 0.894 & Yes\\
Grip constraint & Adaptive & Approx. $7^\circ$ & 12 & 26.70 & 8.62 & 0.697 & No\\
Fixed envelope & None & Initial envelope & 12 & 26.75 & 9.14 & 0.727 & No\\
\bottomrule
\end{tabular}
\end{table*}
Figure~\ref{fig:method} and Table~\ref{tab:method_results} compare BE-LMPC with the grip-constrained and fixed-envelope methods under low friction. Each method uses the same global speed bound and initial laps, together with an identical empirical terminal-cost construction. The convergence traces in Fig.~\ref{fig:method}(a) nearly coincide during the early laps and separate after Lap~9. Direct endpoint labels report the final values. By Lap~12, BE-LMPC reaches \SI{25.50}{s}, whereas the grip-constrained and fixed-envelope methods reach \SI{26.70}{s} and \SI{26.75}{s}, respectively. The corresponding reductions are 4.5\% and 4.7\%.

Figure~\ref{fig:method}(b) places the two dynamic quantities used for drift classification on one criterion plane. BE-LMPC reaches $16.49^\circ$ sideslip with rear-axle utilization of 0.894, thereby entering the hatched region above both thresholds. The grip-constrained and fixed-envelope methods reach $8.62^\circ$ and $9.14^\circ$ sideslip, respectively. Their rear-axle utilizations remain at 0.697 and 0.727. Both baselines therefore cross the sideslip threshold but remain below the rear-saturation condition.

The event raster in Fig.~\ref{fig:method}(c) records drift for BE-LMPC on Laps~11 and 12. Every other marker denotes a recoverable grip lap, and all laps are accepted. Figure~\ref{fig:method}(d) shows that the three final trajectories remain within the same track corridor. The inset magnifies the station with the greatest local path separation. The lap-time advantage therefore accompanies greater sideslip and rear utilization under feasible track motion, rather than an infeasible shortcut. This comparison applies to the tested low-friction track and provides no claim of universal drift optimality.

\subsection{Physical Interpretation and Conditionality}

Engineering intuition suggests that lap-time reduction on a high-friction surface generally does not require drift. The present results support this interpretation within the tested conditions. For a given speed and curvature, higher friction increases $v_{\rm grip}$ and reduces $\Lambda_y$ under the applied propulsion demand. The vehicle can then improve its racing line and braking point while reallocating speed along the track, as observed at $\mu=1.2$, with rear-axle utilization remaining below saturation.

The result does not exclude drift at high friction. Greater task-attainable speed or tighter curvature could drive high-friction tires to their capacity boundary, as could a stronger transient-turning demand. Emergence depends on $\mathcal R$ rather than $\mu$ alone. The tested track and learning horizon bound the present conclusion: low friction makes the drift branch reachable within the attained performance range, whereas the higher-friction cases remain outside that branch.

\subsection{Theoretical and Validation Boundaries}

The theoretical result establishes continuity of the tire force and vehicle vector field across the adhesion-to-sliding boundary. Global asymptotic stability of the complete nonlinear closed loop lies outside this result. The proposition supports continuous evolution from conventional driving to drift, but $\mathcal R\geq1$ alone is insufficient to make drift inevitable. Earlier braking or a changed racing line can keep the response below the drift boundary. Changes in steering and acceleration can also keep the combined tire-force demand below capacity, and the controller can forgo further lap-time improvement.

The evidence constitutes a deterministic, matched-model mechanism study. The plant omits tire-temperature effects and wheel-speed or driveline dynamics. It also excludes suspension roll and left-to-right load transfer, while the controller uses the same single-track Fiala model as the plant. The front-rear force allocation is fixed at $\lambda_x=0.5$. The quantitative location of the handling boundary may depend on this allocation, so the conclusions apply to the stated configuration rather than being independent of drivetrain allocation. In addition, the locally linearized QP imposes a soft terminal condition. The empirical safe set therefore carries no rigorous control-invariance guarantee. Results from one track under constant parameters are insufficient to establish a general emergence boundary, and simulation is insufficient to replace hardware-in-the-loop testing or vehicle-level calibration. These limitations bound the claims but leave the observed cross-lap continuity within the stated model unchanged.

\section{Conclusion}\label{sec:conclusion}

This article formulated vehicle drift emergence as a conditional high-dynamic response that evolves continuously from conventional driving as the demand-to-capacity relationship approaches the handling limit. General tire-map regularity conditions, supported analytically by the Fiala model, establish a continuous physical transition under one dynamic mode. The reachability analysis explains why this continuity permits drift only under appropriate demand-to-capacity conditions. BE-LMPC combines a local empirical safe set and progress cost with recoverability-gated expansion of the handling envelope; its objective remains independent of drift references and rewards. At $\mu=0.6$, the vehicle evolves from grip driving on Lap~3 to first-detected drift on Lap~11, then completes a recoverable Lap~12 in \SI{25.50}{s}. Across the tests from $\mu=0.8$ to 1.2, every lap remains outside the complete joint criterion. Detector sensitivity and method comparisons support this interpretation within the deterministic, matched-model setting.

Future work will separate the prediction and plant models. A four-wheel plant model will incorporate wheel-speed and driveline dynamics, together with actuator and tire-parameter uncertainty. Subsequent tests should examine spatially and temporally varying friction, including online $\mu(s,t)$ estimation and split-$\mu$ surfaces. Extending the handling envelope to coupled yaw-roll dynamics with load transfer will enable assessment of roll-stability and rollover constraints. Hardware-in-the-loop tests followed by scaled-vehicle experiments are required to determine whether the simulated emergence mechanism persists under sensing errors, latency, and tire-model mismatch.

\FloatBarrier
\backmatter

\section*{Acknowledgement}
This work was supported by the National Natural Science Foundation of China under Grants 52505117 and 52472413, and partly supported by Grants-in-Aid for Scientific Research No. 25K07790.

\section*{Declarations}
The authors declare that they have no competing interests.

\setlength{\bibsep}{1pt plus 0.2ex}
\renewcommand{\bibnumfmt}[1]{#1.}
\bibliography{sn-bibliography}

\end{document}